\definecolor{Green}{RGB}{0, 152, 0}
\title{Cross-Validated Off-Policy Evaluation}
\author {
    Matej Cief\textsuperscript{\rm 1,2},
    Branislav Kveton\textsuperscript{\rm 3},
    Michal Kompan\textsuperscript{\rm 2}
}
\begin{document}

\maketitle

\begin{abstract}
We study estimator selection and hyper-parameter tuning in off-policy evaluation. Although cross-validation is the most popular method for model selection in supervised learning, off-policy evaluation relies mostly on theory, which provides only limited guidance to practitioners. We show how to use cross-validation for off-policy evaluation. This challenges a popular belief that cross-validation in off-policy evaluation is not feasible. We evaluate our method empirically and show that it addresses a variety of use cases.
\end{abstract}

\frenchspacing

\section{Introduction}

\renewcommand{\thefootnote}{\fnsymbol{footnote}}
\footnotetext[1]{The work was done at AWS AI Labs.}
\renewcommand{\thefootnote}{\arabic{footnote}}

\emph{Off-policy evaluation} \citep[OPE,][]{li_contextual-bandit_2010} is a framework for estimating the performance of a policy without deploying it online.
It is useful in domains where online A/B testing is costly or too dangerous. 
For example, deploying an untested algorithm in recommender systems or advertising can lead to a loss of revenue \citep{li_contextual-bandit_2010, silver_concurrent_2013}, and in medical treatments, it may have a detrimental effect on the patient's health \citep{hauskrecht_planning_2000}.
A popular approach to off-policy evaluation is \emph{inverse propensity scoring} \citep[IPS,][]{robins_estimation_1994}.
As this method is \emph{unbiased}, it approaches a true policy value with more data.

However, when the data logging policy has a low probability of choosing some actions, IPS-based estimates have a high \emph{variance} and often require a large amount of data to be useful in practice \citep{dudik_doubly_2014}.
Therefore, other lower-variance methods have emerged.
These methods often have hyper-parameters, such as a clipping constant to truncate large propensity weights \citep{ionides_truncated_2008}.
Some works provide theoretical insights \citep{ionides_truncated_2008, metelli_subgaussian_2021} for choosing hyper-parameters, while there are none for many others.

In supervised learning, data-driven techniques for hyper-parameter tuning, such as cross-validation, are more popular than theory-based techniques, such as the Akaike information criterion \citep{bishop_pattern_2006}.
The reason is that they perform better on large datasets, which are standard today.
Unlike in supervised learning, the ground truth value of the target policy is unknown in off-policy evaluation.
A common assumption is that standard machine learning approaches for model selection would fail because there is no unbiased and low-variance approach to compare estimators \citep{su_adaptive_2020}.
Therefore, only a few works studied estimator selection for off-policy evaluation, and no general solution exists \citep{saito_evaluating_2021, udagawa_policy-adaptive_2023}.

Despite common beliefs, we show that cross-validation in off-policy evaluation can be done comparably to supervised learning.
In supervised learning, we do not know the true data distribution, but we are given samples from it.
Each sample is an unbiased and high-variance representation of this distribution.
Nevertheless, we can still get an accurate estimate of true generalization when averaging the model error over these samples in cross-validation.
Similarly, we do not know the true reward distribution in off-policy evaluation, but we are given high-variance samples from it.
The difference is that these samples are biased because they are collected by a different policy.
However, we can use an unbiased estimator, such as IPS, on a held-out validation set to get an unbiased estimate of any policy value.
Then, as with supervised learning, we get an estimate of the estimator's performance.
Our contributions are:
\begin{itemize}
    \item We propose an easy-to-use estimator selection procedure for off-policy evaluation based on cross-validation that requires only data collected by a single policy.
    \item We analyze the loss of our procedure and how it relates to the true loss if the ground truth policy value was known. We use this insight to reduce its variance.
    \item We empirically evaluate the procedure on estimator selection and hyper-parameter tuning problems using nine real-world datasets. The procedure is more accurate than prior techniques and computationally efficient.
\end{itemize}

\section{Off-Policy Evaluation}
\label{sec: off-policy evaluation}

A contextual bandit \citep{langford_exploration_2008} is a popular model of an agent interacting with an unknown environment.
The interaction in round $i$ starts with the agent observing a \emph{context} $x_i \in \cX$, which is drawn i.i.d.\ from an unknown distribution $p$, where $\cX$ is the \emph{context set}.
Then the agent takes an \emph{action} $a_i \sim \pi(\cdot \mid x_i)$ from the \emph{action set} $\cA$ according to its policy $\pi$.
Finally, it receives a stochastic reward $r_i = r(x_i, a_i) + \varepsilon_i$, where $r(x, a)$ is the mean reward of action $a$ in context $x$ and $\varepsilon_i$ is an independent zero-mean noise.

In \emph{off-policy evaluation} \citep{li_contextual-bandit_2010}, a \emph{logging policy} $\pi_0$ interacts with the bandit for $n$ rounds and collects a \emph{logged dataset} $\cD = \set{(x_i, a_i, r_i)}_{i = 1}^n$.
The goal is to estimate the value of a \emph{target policy}
\begin{align*}
  V(\pi)
  = \sum_{x \in \cX} \sum_{a \in \cA} p(x) \pi(a \mid x) r(x, a)
\end{align*}
using the dataset $\cD$.
Various estimators have been proposed to either correct for the distribution shift caused by the differences in $\pi$ and $\pi_0$, or to estimate $r(x, a)$.
We review the canonical ones below and leave the rest to \cref{sec: appendix implementation details}.

The \emph{inverse propensity scores} estimator \citep[IPS,][]{robins_estimation_1994} reweights logged samples as if collected by the target policy $\pi$,
\begin{align}
  \label{eq: ips}
  \hat{V}_\textsc{IPS}(\pi; \cD)
  = \frac{1}{n} \sum_{i = 1}^n \frac{\pi(a_i \mid x_i)}{\pi_0(a_i \mid x_i)} r_i\,.
\end{align}
This estimator is unbiased but suffers from a high variance.
Therefore, a clipping constant is often used to truncate high propensity weights \citep{ionides_truncated_2008}.
This is a hyper-parameter that needs to be tuned.

The \emph{direct method} \citep[DM,][]{dudik_doubly_2014} is a popular approach to off-policy evaluation.
Using the DM, the policy value estimate can be computed as
\begin{align}
  \label{eq: dm}
  \hat{V}_\textsc{DM}(\pi; \cD)
  = \frac{1}{n} \sum_{i = 1}^n \sum_{a \in \cA} \pi(a \mid x_i) \hat{f}(x_i, a)\,,
\end{align} 
where $\hat{f}(x, a)$ is an estimate of the mean reward $r(x, a)$ from $\cD$. The function $\hat{f}$ is chosen from some function class, such as linear functions.

The \emph{doubly-robust} estimator \citep[DR,][]{dudik_doubly_2014} combines the DM and IPS as
\begin{align}
  \label{eq: dr}
  \hat{V}_\textsc{DR}(\pi; \cD)
  = {} & \frac{1}{n} \sum_{i = 1}^n \frac{\pi(a_i \mid x_i)}{\pi_0(a_i \mid x_i)}
  (r_i - \hat{f}(x_i, a_i)) + {} \\
  & \hat{V}_\textsc{DM}(\pi; \cD)\,,
  \nonumber
\end{align}
where $\hat{f}(x, a)$ is an estimate of $r(x, a)$ from $\cD$.
The DR is unbiased when the DM is, or the propensity weights are correctly specified.
The estimator is popular in practice because $r_i - \hat{f}(x_i, a_i)$ reduces the variance of rewards in the IPS part of the estimator.

Many other estimators with tunable parameters exist: TruncatedIPS \citep{ionides_truncated_2008}, \switchdr \citep{wang_optimal_2017}, Continuous OPE \citep{kallus_policy_2018}, CAB \citep{su_cab_2019}, DRos and DRps \citep{su_doubly_2020}, IPS-$\lambda$ \citep{metelli_subgaussian_2021}, MIPS \citep{saito_off-policy_2022}, Exponentially smooth IPS \citep{aouali_exponential_2023}, GroupIPS \citep{peng_offline_2023}, OffCEM \citep{saito_off-policy_2023}, Policy Convolution \citep{sachdeva_off-policy_2024}, Learned MIPS \citep{cief_learning_2024}, and subtracting control variates \citep{vlassis_design_2019}.
Some of these works leave the hyper-parameter selection as an open problem, while others provide a theory for selecting an optimal hyper-parameter, usually by bounding the bias of the estimator.
As in supervised learning, we show that theory is often too conservative, and given enough data, our method can select better hyper-parameters.
Other works use the statistical Lepski's adaptation method \citep{lepski_optimal_1997}, which requires that the hyper-parameters are ordered so that the bias is monotonically increasing.
The practitioner also needs to choose the estimator.
To address these shortcomings, we adapt cross-validation, a well-known machine learning technique for model selection, to estimator selection in a way that is general and applicable to \emph{any} estimator.

\section{Related Work}
\label{sec: related work}

To the best of our knowledge, there are only a few data-driven approaches for estimator selection or hyper-parameter tuning in off-policy evaluation for bandits. We review them below.

\citet{su_adaptive_2020} propose a hyper-parameter tuning method \slope based on Lepski's principle \citep{lepski_optimal_1997}.
The key idea is to order the hyper-parameter values so that the estimators' variances decrease.
Then, we compute the confidence intervals for all the values in this order.
If a confidence interval does not overlap with \emph{all} previous intervals, we stop and select the previous value.
While the method is straightforward, it assumes that the hyper-parameters are ordered such that the bias is monotonically increasing.
This makes it impractical for estimator selection, where it may be difficult to establish a correct order of the estimators.

\citet{saito_evaluating_2021} rely on a logged dataset collected by multiple logging policies.
They use one of the logging policies as the pseudo-target policy and directly estimate its value from the dataset.
Then, they choose the off-policy estimator that most accurately estimates the pseudo-target policy.
This approach assumes that we have access to a logged dataset collected by multiple policies.
Moreover, it ultimately chooses the best estimator for the pseudo-target policy, and not the target policy.
Prior empirical studies \citep{voloshin_empirical_2021} showed that the estimator's accuracy greatly varies when applied to different target policies.

In \pasif \citep{udagawa_policy-adaptive_2023}, two new surrogate policies are created using the logged dataset.
The surrogate policies have two properties: 1) the propensity weights from surrogate logging and target policies imitate those of the true logging and target policies, and 2) the logged dataset can be split in two as if each part was collected by one of the surrogate policies.
They learn a neural network that optimizes this objective.
Then, they evaluate estimators as in \citet{saito_evaluating_2021}, using surrogate policies and a precisely split dataset.
They show that estimator selection on these surrogate policies adapts better to the true target policy.

In this work, we do not require multiple logging policies, make no strong assumptions, and use principal techniques from supervised learning that are well-known and loved by practitioners.
Therefore, our method is easy to implement and, as showed in \cref{sec: experiments}, also more accurate.

A popular approach in offline policy selection \citep{lee_model_2022, nie_data-efficient_2022, saito_hyperparameter_2024} is to evaluate candidate policies on a held-out set by OPE.
\citet{nie_data-efficient_2022} even studied a similar approach to cross-validation.
While these papers seem similar to our work, the problems are completely different.
All estimators in our work estimate the same value $V(\pi)$, and this structure is used in the design of our solution (\cref{sec: method}).
We also address important questions that the prior works did not, such as how to choose a validator and how to choose the training-validation split.
A naive application of cross-validation without addressing these issues fails in OPE (Appendix B). 

\section{Cross-Validation in Machine Learning}
\label{sec: cross-validation in machine learning}

Model selection \citep{bishop_pattern_2006} is a classic machine learning problem.
It can be addressed by two kinds of methods.
The first approach is probabilistic model selection, such as the Akaike information criterion \citep{parzen_information_1998} and Bayesian information criterion \citep{schwarz_estimating_1978}.
These methods penalize the complexity of the learned model during training \citep{stoica_model-order_2004}.
They are designed using theory and do not require a validation set.
Broadly speaking, they work well on smaller datasets because they favor simple models \citep{bishop_pattern_2006}.
The second approach estimates the performance of models on a held-out validation set, such as \emph{cross-validation} \citep[CV,][]{stone_cross-validatory_1974}.
CV is a state-of-the-art approach for large datasets and neural networks \citep{yao_early_2007}.
We focus on this setting because large amounts of logged data are available in modern machine learning.

In the rest of this section, we introduce cross-validation.
Let $f: \realset^d \to \realset$ be a function that maps features $x \in \realset^d$ to $\realset$.
It belongs to a function class $\cF$.
For example, $f$ is a linear function, and $\cF$ is the class of linear functions.
A machine learning algorithm $\alg$ maps a dataset $\cD$ to a function in $\cF$.
We write this as $f = \alg(\cF, \cD)$.
One approach to choosing $f$ is to minimize the \emph{squared loss} on $\cD$,
\begin{align*}
  L(f, \cD)
  = \sum_{(x, y) \in \cD} (y - f(x))^2\,,
\end{align*}
which can be written as
\begin{align}
  \alg(\cF, \cD)
  = \argmin_{f \in \cF} L(f, \cD)\,.
  \label{eq: overfitting}
\end{align}
This leads to overfitting on $\cD$ \citep{devroye_probabilistic_1996}.
To prevent this, CV is commonly used to evaluate $f$ on unseen validation data to give a more honest estimate of its generalization ability.
In $K$-fold CV, the dataset is split into $K$ folds.
We denote the validation data in the $k$-th fold by $\tilde{\cD}_k$ and all other training data by $\hat{\cD}_k$.
Using this notation, the average loss on a held-out set can be formally written as $\frac{1}{K} \sum_{k = 1}^K L(\alg(\cF, \hat{\cD}_k), \tilde{\cD}_k)$.

Cross-validation can be used to select a model as follows.
Suppose that we have a set of function classes $\mathbf{F} = \set{\cF}$.
For instance, $\mathbf{F} = \set{\cF_1, \cF_2}$, where $\cF_1$ is the class of linear functions and $\cF_2$ is the class of quadratic functions.
Then, the best function class under CV is
\begin{align}
  \label{eq: empirical risk minimization cv ml}
  \cF_*
  = \argmin_{\cF \in \mathbf{F}}
  \frac{1}{K} \sum_{k = 1}^K L(\alg(\cF, \hat{\cD}_k), \tilde{\cD}_k)\,.
\end{align}
After the best function class is chosen, a model is trained on the entire dataset as $f_* = \alg(\cF_*, \cD)$.

\section{Off-Policy Cross-Validation}
\label{sec: method}

Now, we adapt cross-validation to off-policy evaluation.
In supervised learning, we do not know the true data distribution but are given samples from it.
Each individual sample is an unbiased but noisy estimate of the true value.
Similarly, we do not know the true value of policy $\pi$ in off-policy evaluation.
However, we have samples collected by another policy $\pi_0$ and thus can estimate $V(\pi)$.

To formalize this observation, let $\tilde{V}(\pi; \tilde{\cD}_k)$ be an unbiased \emph{validator}, such as $\hat{V}_\textsc{IPS}$ or $\hat{V}_\textsc{DR}$ in \cref{sec: off-policy evaluation}, that estimates the true value from a \emph{validation set} $\tilde{\cD}_k$.
Let $\hat{V}(\pi; \hat{\cD}_k)$ be an \emph{evaluated estimator} on a \emph{training set} $\hat{\cD}_k$.
Then the squared loss of the evaluated estimator $\hat{V}_k = \hat{V}(\pi; \hat{\cD}_k)$ with respect to the validator $\tilde{V}_k = \tilde{V}(\pi; \tilde{\cD}_k)$ is
\begin{align}
  L(\hat{V}_k, \tilde{V}_k)
  = (\tilde{V}_k - \hat{V}_k)^2\,.
  \label{eq: loss ope}
\end{align}
Unlike in supervised learning (\cref{sec: cross-validation in machine learning}), the loss is only over one observation, an unbiased estimate of $V(\pi)$.
As in supervised learning, we randomly split the dataset $\cD$ into $\hat{\cD}_k$ and $\tilde{\cD}_k$, for $K$ times.
The average loss of an estimator $\hat{V}$ on a held-out validation set is $\frac{1}{K} \sum_{k = 1}^K L(\hat{V}_k, \tilde{V}_k)$.
In contrast to \cref{sec: cross-validation in machine learning}, we use \emph{Monte Carlo cross-validation} \citep{xu_monte_2001} because we need to control the sizes of $\hat{\cD}_k$ and $\tilde{\cD}_k$ independently from the number of splits.

The average loss on a held-out set can be used to select an estimator as follows.
Suppose that we have a set of estimators $\mathbf{V}$.
For instance, if $\mathbf{V} = \{\hat{V}_\textsc{IPS}, \hat{V}_\textsc{DM}, \hat{V}_\textsc{DR}\}$, the set contains IPS, DM, and DR (\cref{sec: off-policy evaluation}).
Then, the best estimator under CV can be defined similarly to \eqref{eq: empirical risk minimization cv ml} as
\begin{align}
  \hat{V}_*
  = \argmin_{\hat{V} \in \mathbf{V}} \frac{1}{K} \sum_{k = 1}^K
  L(\hat{V}_k, \tilde{V}_k)\,.
  \label{eq: cv ope}
\end{align}
After the best estimator is chosen, we return the estimated policy value from the entire dataset $\cD$, $\hat{V}_*(\pi; \cD)$.
This is the key idea in our proposed method.

To make the algorithm practical, we need to control the variances of the evaluated estimator and validator.
The rest of \cref{sec: method} contains an analysis that provides insights into this problem.
We also make \eqref{eq: cv ope} more robust.

\subsection{Analysis}
\label{sec: analysis}

We would like to choose an estimator that minimizes the true squared loss
\begin{align}
  (V(\pi) - \hat{V}(\pi))^2\,,
  \label{eq:true loss}
\end{align}
where $\hat{V}(\pi) = \hat{V}(\pi; \cD)$ is its evaluated estimate on dataset $\cD$ and $V(\pi)$ is the true policy value.
This cannot be done because $V(\pi)$ is unknown.
On the other hand, if $V(\pi)$ was known, we would not have an off-policy estimation problem.
In this analysis, we show that the minimized loss in \eqref{eq: cv ope} is a good proxy for \eqref{eq:true loss}.

We make the following assumptions.
The only randomness in our analysis is in how $\cD$ is split into the training set $\hat{\cD}_k$ and validation set $\tilde{\cD}_k$.
The sizes of these sets are $\hat{n}$ and $\tilde{n}$, respectively, and $\hat{n} + \tilde{n} = n$.
Let $\hat{V}_k = \hat{V}(\pi; \hat{\cD}_k)$ be the value of policy $\pi$ estimated by the evaluated estimator on $\hat{\cD}_k$ and $\hat{\mu} = \mathbb{E}[\hat{V}_k]$ be its mean.
Let $\tilde{V}_k = \tilde{V}(\pi; \tilde{\cD}_k)$ be the value of policy $\pi$ estimated by the validator on $\tilde{\cD}_k$ and $\tilde{\mu} = \mathbb{E}[\tilde{V}_k]$ be its mean.
Using this notation, the true loss in \eqref{eq:true loss} can be bounded from above as follows.

\begin{theorem}
\label{thm:true loss} For any split $k \in [K]$,
\begin{align*}
  (\hat{V}(\pi) - V(\pi))^2
  \leq {} & 2 \mathbb{E}[(\hat{V}_k - \tilde{V}_k)^2] + {} \\
  & 4 \mathbb{E}[(\hat{V}_k - \hat{\mu})^2] +
  4 \mathbb{E}[(\tilde{V}_k - \tilde{\mu})^2] + {} \\
  & 4 (\hat{\mu} - \hat{V}(\pi))^2 +
  4 (\tilde{\mu} - V(\pi))^2\,.
\end{align*}
\end{theorem}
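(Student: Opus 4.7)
The plan is to insert $\hat{V}_k$ and $\tilde{V}_k$ as intermediate quantities between $\hat{V}(\pi)$ and $V(\pi)$, apply the elementary inequality $(a+b)^2 \leq 2a^2 + 2b^2$ twice with a carefully chosen grouping, and then take expectation over the random split and finish with a bias--variance identity.

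First I would write, for any $k \in [K]$, the telescoping identity
\[
  \hat{V}(\pi) - V(\pi) = (\hat{V}_k - \tilde{V}_k) + \bigl[(\hat{V}(\pi) - \hat{V}_k) + (\tilde{V}_k - V(\pi))\bigr],
\]
isolating the cross-validation discrepancy $\hat{V}_k - \tilde{V}_k$ from the two ``fidelity'' terms measuring how far $\hat{V}_k$ and $\tilde{V}_k$ are from their respective targets. Applying $(a+b)^2 \leq 2a^2 + 2b^2$ at the outer level and again inside the bracket gives the pointwise bound
\[
  (\hat{V}(\pi) - V(\pi))^2 \leq 2(\hat{V}_k - \tilde{V}_k)^2 + 4(\hat{V}(\pi) - \hat{V}_k)^2 + 4(\tilde{V}_k - V(\pi))^2.
\]

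Next I would take expectation over the random split. The left-hand side is deterministic once $\cD$ and $\pi$ are fixed, so it is unchanged. For the two remaining squared terms I would apply the standard bias--variance identity, which is clean because $\mathbb{E}[\hat{V}_k] = \hat{\mu}$ and $\mathbb{E}[\tilde{V}_k] = \tilde{\mu}$ kill the cross products:
\[
  \mathbb{E}[(\hat{V}(\pi) - \hat{V}_k)^2] = (\hat{V}(\pi) - \hat{\mu})^2 + \mathbb{E}[(\hat{V}_k - \hat{\mu})^2],
\]
and analogously $\mathbb{E}[(\tilde{V}_k - V(\pi))^2] = (\tilde{\mu} - V(\pi))^2 + \mathbb{E}[(\tilde{V}_k - \tilde{\mu})^2]$. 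Substituting these into the pointwise inequality recovers exactly the five terms on the right-hand side with their stated coefficients.

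The only delicate point is the initial grouping. A flat Cauchy--Schwarz applied to the five-term chain $(\hat{V}(\pi) - \hat{\mu}) + (\hat{\mu} - \hat{V}_k) + (\hat{V}_k - \tilde{V}_k) + (\tilde{V}_k - \tilde{\mu}) + (\tilde{\mu} - V(\pi))$ would inflate every coefficient to $5$, and grouping more symmetrically would put a $4$ in front of the cross-validation term rather than the desired $2$. Once the right grouping is chosen, the rest of the argument is routine algebra and a direct bias--variance decomposition.
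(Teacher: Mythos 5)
Your proof is correct and follows essentially the same route as the paper: the same decomposition isolating $\hat{V}_k - \tilde{V}_k$, two applications of $(a+b)^2 \leq 2(a^2+b^2)$ yielding coefficients $2$ and $4$, and the bias--variance identity on the two remaining squared terms (the paper merely takes expectation at the start, since the left-hand side is deterministic, rather than first deriving the pointwise bound). No gaps.
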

\begin{proof}
The proof uses independence assumptions and that
\begin{align}
  (a + b)^2
  \leq 2 (a^2 + b^2)
  \label{eq:square of sum}
\end{align}
holds for any $a, b \in \realset$. As a first step, we introduce random $\hat{V}_k$ and $\tilde{V}_k$, and then apply \eqref{eq:square of sum},
\begin{align*}
  & (\hat{V}(\pi) - V(\pi))^2 \\
  & = \mathbb{E}[(\hat{V}(\pi) - \hat{V}_k + \hat{V}_k - V(\pi) +
  \tilde{V}_k - \tilde{V}_k)^2] \\
  & \leq 2 \mathbb{E}[(\hat{V}_k - \tilde{V}_k)^2] +
  2 \mathbb{E}[(\hat{V}(\pi) - \hat{V}_k - V(\pi) + \tilde{V}_k)^2]\,.
\end{align*}
Using \eqref{eq:square of sum} again, we bound the last term from above by
\begin{align*}
  4 \mathbb{E}[(\hat{V}_k - \hat{V}(\pi))^2] +
  4 \mathbb{E}[(\tilde{V}_k - V(\pi))^2]\,.
\end{align*}
Since $\hat{\mu} = \mathbb{E}[\hat{V}_k]$ and $\hat{\mu} - \hat{V}(\pi)$ is fixed, we get
\begin{align*}
  \mathbb{E}[(\hat{V}_k - \hat{V}(\pi))^2]
  & = \mathbb{E}[(\hat{V}_k - \hat{\mu} +
  \hat{\mu} - \hat{V}(\pi))^2] \\
  & = \mathbb{E}[(\hat{V}_k - \hat{\mu})^2] +
  (\hat{\mu} - \hat{V}(\pi))^2\,.
\end{align*}
Similarly, since $\tilde{\mu} = \mathbb{E}[\tilde{V}_k]$ and $\tilde{\mu} - V(\pi)$ is fixed, we get
\begin{align*}
  \mathbb{E}[(\tilde{V}_k - V(\pi))^2]
  = \mathbb{E}[(\tilde{V}_k - \tilde{\mu})^2] +
  (\tilde{\mu} - V(\pi))^2\,.
\end{align*}
Finally, we chain all inequalities and get our claim.
\end{proof}

\noindent The bound in \cref{thm:true loss} can be viewed as follows.
The first term $\mathbb{E}[(\hat{V}_k - \tilde{V}_k)^2]$ is the expectation of our optimized loss (\cref{thm:empirical loss}).
The second term is the variance of the evaluated estimator on a training set of size $\hat{n}$, and thus is $\cO(\hat{\sigma}^2 / \hat{n})$ for some $\hat{\sigma}^2 > 0$.
The third term is the variance of the validator on a validation set of size $\tilde{n}$, and thus is $\cO(\tilde{\sigma}^2 / \tilde{n})$ for some $\tilde{\sigma}^2 > 0$.
The fourth term is zero for any unbiased off-policy estimator in \cref{sec: off-policy evaluation}.
We assume that $\hat{\mu} = \hat{V}(\pi)$ in our discussion.
Finally, the last term is the difference between the unbiased estimate of the value of policy $\pi$ on $\cD$ and $V(\pi)$.
This term is $\cO(\log(1 / \delta) / n)$ with probability at least $1 - \delta$ by standard concentration inequalities \citep{boucheron_concentration_2013}, since $\cD$ is a sample of size $n$.
Based on our discussion,
\begin{align*}
  (\hat{V}(\pi) - V(\pi))^2
  \leq {} & 2 \mathbb{E}[(\hat{V}_k - \tilde{V}_k)^2] + {} \\
  & \cO(\hat{\sigma}^2 / \hat{n} + \tilde{\sigma}^2 / \tilde{n}) +
  \cO(\log(1 / \delta) / n)
\end{align*}
holds with probability at least $1 - \delta$.

The above bound can be minimized as follows.
The last term measures how representative the dataset $\cD$ is.
This is out of our control.
To minimize $\cO(\hat{\sigma}^2 / \hat{n} + \tilde{\sigma}^2 / \tilde{n})$, we set $\hat{n}$ and $\tilde{n}$ proportionally to the variances of the evaluated estimator and validator,
\begin{align*}
  \hat{n}
  = \frac{\hat{\sigma}^2}{\hat{\sigma}^2 + \tilde{\sigma}^2} n\,, \quad
  \tilde{n}
  = \frac{\tilde{\sigma}^2}{\hat{\sigma}^2 + \tilde{\sigma}^2} n\,,
\end{align*}
respectively.
Finally, we relate $\mathbb{E}[(\hat{V}_k - \tilde{V}_k)^2]$ to our minimized loss in \eqref{eq: cv ope}.

\begin{theorem}
\label{thm:empirical loss} For any split $\ell \in [K]$,
\begin{align*}
  \E{\frac{1}{K} \sum_{k = 1}^K (\hat{V}_k - \tilde{V}_k)^2}
  = \mathbb{E}[(\hat{V}_\ell - \tilde{V}_\ell)^2]\,.
\end{align*}
The variance of the estimator is
\begin{align*}
  \var{\frac{1}{K} \sum_{k = 1}^K (\hat{V}_k - \tilde{V}_k)^2}
  = \cO(1 / K)\,.
\end{align*}
\end{theorem}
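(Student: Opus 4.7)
The plan is to leverage the fact that in Monte Carlo cross-validation the $K$ splits are drawn i.i.d.\ from the same distribution over partitions of the fixed dataset $\cD$ into a training set $\hat{\cD}_k$ and a validation set $\tilde{\cD}_k$. The only randomness in the analysis (as stated in \cref{sec: analysis}) is precisely this splitting, so both claims will reduce to elementary facts about averages of i.i.d.\ random variables, with no nontrivial probabilistic machinery needed.

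For the expectation claim, I would first apply linearity of expectation to pull the factor $1/K$ and the sum outside, writing the average as $\frac{1}{K}\sum_{k=1}^K \mathbb{E}[(\hat{V}_k - \tilde{V}_k)^2]$. Each summand is a deterministic function of the $k$-th random split applied to the fixed $\cD$, and since the splits are identically distributed, each summand has the same expectation. Hence $\mathbb{E}[(\hat{V}_k - \tilde{V}_k)^2]$ does not depend on $k$, and the average collapses to $\mathbb{E}[(\hat{V}_\ell - \tilde{V}_\ell)^2]$ for any choice of $\ell$.

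For the variance claim, I would use the independence of the $K$ Monte Carlo splits to write the variance of the sum as the sum of variances, giving $\mathrm{Var}\!\left[\frac{1}{K}\sum_{k=1}^K (\hat{V}_k - \tilde{V}_k)^2\right] = \frac{1}{K}\,\mathrm{Var}[(\hat{V}_1 - \tilde{V}_1)^2]$. The right-hand side is $\mathcal{O}(1/K)$ as soon as the per-split variance is finite, at which point the claim follows by inspection.

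The main thing to watch out for is therefore the boundedness of $\mathrm{Var}[(\hat{V}_1 - \tilde{V}_1)^2]$, which amounts to a finite fourth moment of $\hat{V}_1 - \tilde{V}_1$. For the concrete estimators reviewed in \cref{sec: off-policy evaluation}---IPS with clipping, DM with a bounded reward model, and DR built from them---rewards are bounded and propensity ratios are either clipped or controlled by standard positivity assumptions, so each estimator is bounded and every moment is finite. The constant hidden inside $\mathcal{O}$ then depends on these bounds but not on $K$, which matches the asymptotic content of the statement.
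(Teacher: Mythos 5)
Your proposal is correct and follows essentially the same route as the paper: linearity of expectation plus identically distributed splits for the first claim, and independence of the Monte Carlo splits for the second, where the paper simply writes out the variance-of-the-average computation explicitly (expanding the second moment and counting the $K(K-1)$ cross terms) rather than quoting the standard i.i.d.\ formula as you do. Your added remark that the $\cO(1/K)$ rate requires a finite fourth moment of $\hat{V}_k - \tilde{V}_k$ makes explicit a hypothesis the paper leaves implicit, but it does not change the argument.
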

\begin{proof}
The first claim follows from the linearity of expectation and that $\hat{V}_k - \tilde{V}_k$ are drawn independently from the same distribution.
To prove the second claim, we rewrite the variance of the estimator as
\begin{align}
  \frac{\sum_{i, j = 1}^K
  \mathbb{E}[(\hat{V}_i - \tilde{V}_i)^2 (\hat{V}_j - \tilde{V}_j)^2]}{K^2} -
  \mathbb{E}[(\hat{V}_k - \tilde{V}_k)^2]^2
  \label{eq:variance decomposition}
\end{align}
using $\var{X} = \mathbb{E}[X^2] - \mathbb{E}[X]^2$. Because the random splits are independent,
\begin{align*}
  \mathbb{E}[(\hat{V}_i - \tilde{V}_i)^2 (\hat{V}_j - \tilde{V}_j)^2]
  = \mathbb{E}[(\hat{V}_k - \tilde{V}_k)^2]^2
\end{align*}
for any $i \neq j$.
This happens exactly $K (K - 1)$ times out of $K^2$.
As a result, \eqref{eq:variance decomposition} can be rewritten as
\begin{align*}
  \frac{1}{K} \mathbb{E}[(\hat{V}_k - \tilde{V}_k)^4] -
  \frac{1}{K} \mathbb{E}[(\hat{V}_k - \tilde{V}_k)^2]^2
  = \cO(1 / K)\,.
\end{align*}
This concludes the proof.
\end{proof}

\noindent \cref{thm:empirical loss} says that the estimated loss from $K$ random splits concentrates at $\mathbb{E}[(\hat{V}_k - \tilde{V}_k)^2]$ at rate $\cO(1 / \sqrt{K})$.
Hence, by standard concentration inequalities \citep{boucheron_concentration_2013},
\begin{align*}
  (\hat{V}(\pi) - V(\pi))^2
  \leq {} & \frac{2}{K} \sum_{k = 1}^K (\hat{V}_k - \tilde{V}_k)^2 + {} \\
  & \cO(\hat{\sigma}^2 / \hat{n} + \tilde{\sigma}^2 / \tilde{n}) + {} \\
  & \cO(\log(1 / \delta) / n) + \cO(\sqrt{\log(1 / \delta') / K})
\end{align*}
holds with probability at least $1 - \delta - \delta'$.
The last term can be driven to zero with more random splits $K$.

\subsection{One Standard Error Rule}
\label{sec: one standard error rule}

If the set of estimators $\mathbf{V}$ in \eqref{eq: cv ope} is large, we could choose a poor estimator that performs well just by chance with a high probability.
This problem is exacerbated in small datasets \citep{varma_bias_2006}.
To account for this in supervised CV, \citet{hastie_elements_2009} proposed a heuristic called the \emph{one standard error rule}.
This heuristic chooses the simplest model whose performance is within one standard error of the best model.
Roughly speaking, these models cannot be statistically distinguished.

Inspired by the one standard error rule, we choose an estimator with the \emph{lowest one-standard-error upper bound} on its loss.
This is also known as \emph{pessimistic optimization} \citep{buckman_importance_2020}.
Compared to the original rule \citep{hastie_elements_2009}, we do not need to know which estimator has the lowest complexity.

\begin{algorithm}[t]
  \caption{Off-policy evaluation with cross-validated estimator selection.}
  \label{alg: cross-validation}
  \begin{algorithmic}[1]
    \STATE \textbf{Input:} Evaluated policy $\pi$, logged dataset $\cD$, set of estimators $\mathbf{V}$, number of random splits $K$
    \STATE $\tilde{\sigma}^2 \gets
    \text{Empirical estimate of $\var{\tilde{V}(\pi;\cD)}$}$
    \FOR{$\hat{V} \in \mathbf{V}$}
      \STATE $\hat{\sigma}^2 \gets
      \text{Empirical estimate of $\var{\hat{V}(\pi; \cD)}$}$
      \FOR{$k = 1, \dots, K$}
        \STATE $\hat{\cD}_k, \tilde{\cD}_k \gets
        \text{Split $\cD$ such that $|\hat{\cD}_k| / |\tilde{\cD}_k| =
        \hat{\sigma}^2 / \tilde{\sigma}^2$}$
        \STATE $L_{\hat{V}, k} \gets
        (\tilde{V}(\pi; \tilde{\cD}_k) - \hat{V}(\pi; \hat{\cD}_k))^2$
      \ENDFOR
      \STATE $\bar{L}_{\hat{V}} \gets
      \frac{1}{K} \sum_{k = 1}^K L_{\hat{V}, k}$
    \ENDFOR
    \STATE $\displaystyle \hat{V}_* \gets
    \argmin_{\hat{V} \in \mathbf{V}} \bar{L}_{\hat{V}} +
    \sqrt{\frac{1}{K - 1} \sum_{k = 1}^K (L_{\hat{V}, k} - \bar{L}_{\hat{V}})^2}$
    \STATE \textbf{Output:} $\hat{V}_*(\pi; \cD)$
  \end{algorithmic}
\end{algorithm}

\subsection{Algorithm}
\label{sec:algorithm}

We call our method \textbf{O}ff-policy \textbf{C}ross-\textbf{V}alidation (\ocv) and present its pseudo-code in \cref{alg: cross-validation}.
The method works as follows.
First, we estimate the variance of the validator $\tilde{V}$ (line 2).
Details are provided in \cref{sec: appendix variance estimation}.
Second, we estimate the variance of each evaluated estimator $\hat{V}$ (line 4).
Third, we repeatedly split $\cD$ into the training and validation sets (line 6) and calculate the loss of the evaluated estimator with respect to the validator (line 7).
Finally, we select the estimator $\hat{V}_*$ with the lowest one-standard-error upper bound on its estimated loss (line 11).

\begin{figure*}
  \centering
  \includegraphics[width=\linewidth]{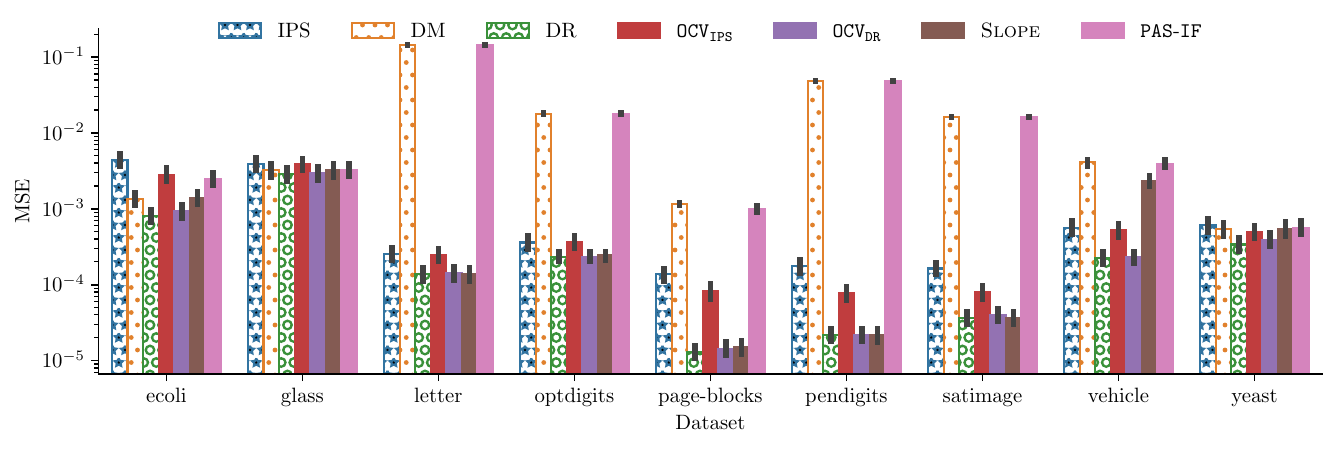}
  \caption{MSE of our estimator selection methods, \ocvips and \ocvdr, compared against two other estimator selection baselines, \slope and \pasif. The methods select the best estimator out of IPS, DM, and DR. In all figures, we report $95\%$ confidence intervals estimated by bootstrapping.}
  \label{fig: UCI estimator selection}
\end{figure*}

\begin{figure*}
  \centering
  \includegraphics[width=\linewidth]{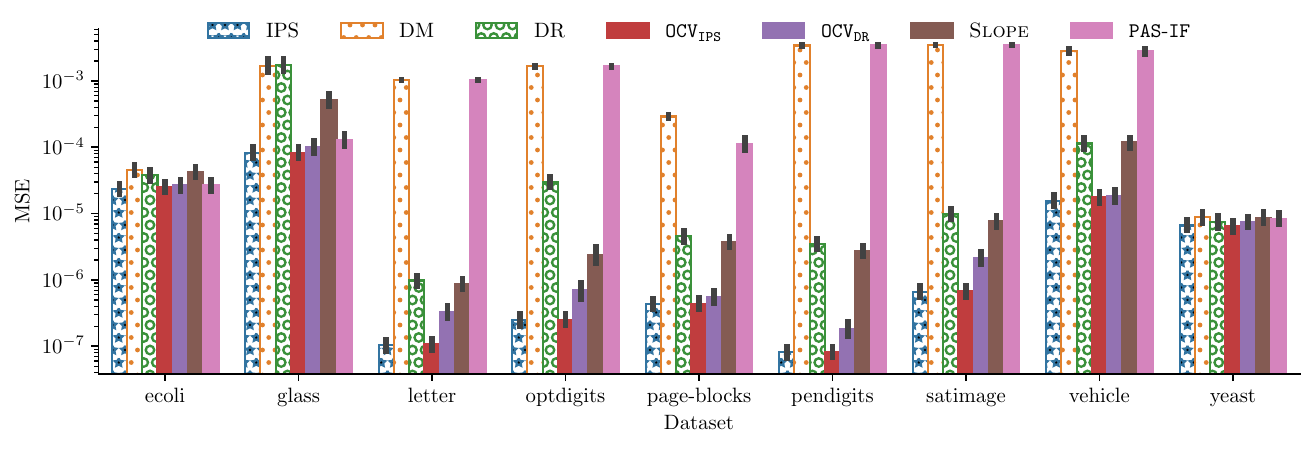}
  \caption{MSE of the methods for temperatures $\beta_0 = 1$ and $\beta_1 = -10$. \ocv performs well even when its validator does not, for example \ocvdr on the \emph{glass} dataset. This also shows that \ocv does not simply choose the same estimator as the validator.}
  \label{fig: UCI estimator selection beta 1 -10}
\end{figure*}

\begin{figure*}
  \centering
  \includegraphics[width=\linewidth]{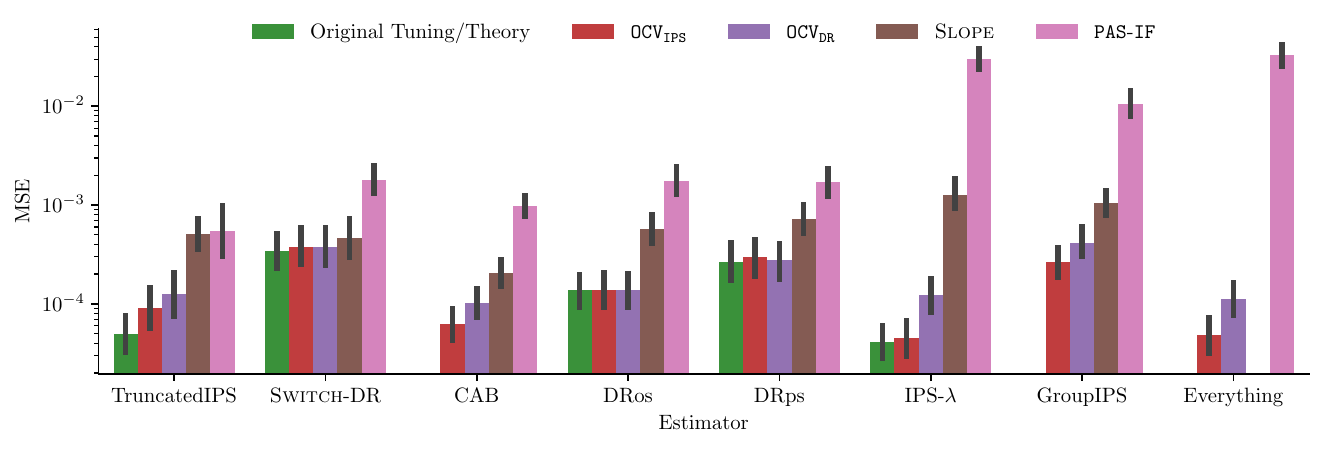}
  \caption{MSE of our estimator selection methods and specialized theoretical approaches applied to hyper-parameter tuning of various estimators. \emph{Everything} refers to the joint estimator selection and hyper-parameter tuning. This shows that \ocv is a reliable and practical method for choosing a suitable and well-tuned estimator.}
  \label{fig: tuning}
\end{figure*}

\section{Experiments}
\label{sec: experiments}

We conduct three main experiments.
First, we evaluate \ocv on an estimator selection problem among IPS, DM, and DR.
Second, we apply \ocv to hyper-parameter tuning of seven other estimators.
We compare against \slope, \pasif, and estimator-specific tuning heuristics if the authors provided one.
Finally, we show that \ocv can jointly choose the best estimator and its hyper-parameters, and thus is a practical method to get a high-quality estimator.
\cref{sec: appendix additional experiments} contains ablation studies on the individual components of \ocv and computational efficiency.
We also show the importance of having an unbiased validator and that \ocv performs well even in low-data regimes. \\

\noindent \textbf{Datasets.}
We take nine UCI ML Repository datasets \citep{bache_uci_2013} and convert them into contextual bandit problems, similarly to prior works \citep{dudik_doubly_2014,wang_optimal_2017,farajtabar_more_2018,su_cab_2019,su_doubly_2020}.
The datasets have different characteristics (Appendix A), 
such as sample size and the number of features, and thus cover a wide range of potential applications of our method.
Each dataset contains $n$ examples, $\cH = \set{(x_i, y_i)}_{i \in [n]}$, where $x_i \in \realset^d$ and $y_i \in [m]$ are the feature vector and label of example $i$, respectively; and $m$ is the number of classes.
We split each $\cH$ into two halves, the bandit feedback dataset $\cH_b$ and policy learning dataset $\cH_\pi$.

The \emph{bandit feedback dataset} is used to compute the policy value and log data.
Specifically, the value of policy $\pi$ is
\begin{align*}
  V(\pi)
  = \frac{1}{|\cH_b|} \sum_{(x, y) \in \cH_b} \sum_{a = 1}^m \pi(a \mid x) \I{a = y}\,.
\end{align*}
The logged dataset $\cD$ has the same size as $\cH_b$, $n = |\cH_b|$, and is defined as
\begin{align*}
  \cD
  = \set{(x, a, \I{a = y}): a \sim \pi_0(\cdot \mid x), \, (x, y) \in \cH_b}\,.
\end{align*}
For each example in $\cH_b$, the logging policy $\pi_0$ takes an action conditioned on its feature vector.
The reward is one if the index of the action matches the label and zero otherwise.

The \emph{policy learning dataset} is used to estimate $\pi$ and $\pi_0$.
We proceed as follows.
First, we take a bootstrap sample of $\cH_\pi$ of size $|\cH_\pi|$ and learn a logistic model for each class $a \in [m]$.
Let $\theta_{a, 0} \in \realset^d$ be the learned logistic model parameter for class $a$.
Second, we take another bootstrap sample of $\cH_\pi$ of size $|\cH_\pi|$ and learn a logistic model for each class $a \in [m]$.
Let $\theta_{a, 1} \in \realset^d$ be the learned logistic model parameter for class $a$ in the second bootstrap sample.
Based on $\theta_{a, 0}$ and $\theta_{a, 1}$, we define our policies as
\begin{align}
\begin{split}
  \label{eq: experiments policy softmax}
  \pi_0(a \mid x)
  = \frac{\exp(\beta_0 x^\top \theta_{a, 0} )}
  {\sum_{a' = 1}^m \exp(\beta_0 x^\top \theta_{a', 0} )}\,, \\
  \pi(a \mid x)
  = \frac{\exp(\beta_1 x^\top \theta_{a, 1} )}
  {\sum_{a' = 1}^m \exp(\beta_1 x^\top \theta_{a', 1} )}\,.
  \end{split}
\end{align}
The parameters $\beta_0$ and $\beta_1$ are \emph{inverse temperatures} of the softmax function.
Positive values prefer high-value actions and vice versa.
The zero temperature is a uniform policy.
The temperatures $\beta_0$ and $\beta_1$ are chosen later in each experiment.
We take two bootstrap samples to ensure that $\pi$ and $\pi_0$ are not simple transformations of each other. \\

\noindent \textbf{Our method and baselines.}
We evaluate two variants of our method, \ocvips and \ocvdr, with IPS and DR as validators.
\ocv is implemented as described in \cref{alg: cross-validation} with $K = 10$.
The reward model $\hat{f}$ in all relevant estimators is learned using ridge regression with a regularization coefficient $0.001$.
We consider two baselines: \slope and \pasif (\cref{sec: related work}).
In the tuning experiment (\cref{sec: experiment hyper-parameter tuning}), we also implement the original tuning procedures if the authors provided one.
All implementation details are in \cref{sec: appendix tuning methods details}.

\subsection{Estimator Selection}
\label{sec: experiment estimator selection}

We want to choose the best estimator from three candidates: IPS in \eqref{eq: ips}, DM in \eqref{eq: dm}, and DR in \eqref{eq: dr}.
We use $\beta_0 = 1$ for the logging policy and $\beta_1 = 10$ for the target policy.
This is a realistic scenario where the logging policy prefers high-value actions, and the target policy takes them even more often.
\slope requires the estimators to be ordered by their variances.
This may not be always possible.
However, the bias-variance trade-offs of IPS, DM, and DR are generally
\begin{align*}
  \var{\hat{V}_\textsc{IPS}(\pi)}
  \geq \var{\hat{V}_\textsc{DR}(\pi)}
  \geq \var{\hat{V}_\textsc{DM}(\pi)}
\end{align*}
and we use this order.
All our results are averaged over $500$ independent runs.
A new run always starts by splitting the dataset into the bandit feedback and policy learning datasets, as described earlier. \\

\noindent \textbf{Cross-validation consistently chooses a good estimator.}
\cref{fig: UCI estimator selection} shows that our methods avoid the worst estimator and perform better on average than both \slope and \pasif.
\ocvdr significantly outperforms all methods on two datasets while never being much worse.
We observe that \slope performs well because its bias-variance assumptions are satisfied.
\pasif prefers DM even though it performs poorly.
We hypothesize that this is because the tuning procedure of \pasif is biased.
As we show in \cref{sec: appendix additional experiments}, a biased validator tends to prefer similarly biased estimators and thus cannot be reliably used for estimator selection. \\

\noindent \textbf{Cross-validation with DR performs well even when DR performs poorly.}
One may think that \ocvdr performs well in \cref{fig: UCI estimator selection} because the best estimator is DR \citep{dudik_doubly_2014}.
To disprove this, we change the temperature of the target policy to $\beta_1 = -10$ and show new results in \cref{fig: UCI estimator selection beta 1 -10}.
The DR is no longer the best estimator, yet \ocvdr performs well.
Both of our methods outperform \slope and \pasif again.
We also observe that \slope performs worse in this experiment.
Since both IPS and DR are unbiased, their confidence intervals often overlap.
Therefore, \slope mostly chooses DR regardless of its performance.

\subsection{Hyper-Parameter Tuning}
\label{sec: experiment hyper-parameter tuning}

We also evaluate \ocv on the hyper-parameter tuning of seven estimators from \cref{sec: related work}.
We present them next.
TruncatedIPS \citep{ionides_truncated_2008} is parameterized by a clipping constant $M$ that clips higher propensity weights than $M$.
The authors suggest $M = \sqrt{n}$.
\switchdr \citep{wang_optimal_2017} has a threshold parameter $\tau$ that switches to DM if the propensity weights are too high and uses DR otherwise.
The authors propose their own tuning strategy by pessimistically bounding the estimator's bias \citep{wang_optimal_2017}.
CAB \citep{su_cab_2019} has a parameter $M$ that adaptively blends DM and IPS.
The authors do not propose any tuning method.
DRos and DRps \citep{su_doubly_2020} have a parameter $\lambda$ that regularizes propensity weights to decrease DR's variance.
The authors propose a tuning strategy similar to that of \switchdr.
IPS-$\lambda$ \citep{metelli_subgaussian_2021} has a parameter $\lambda$ that regularizes propensity weights while keeping the estimates differentiable, which is useful for off-policy learning.
The authors propose a differentiable tuning objective to get optimal $\lambda$.
GroupIPS \citep{peng_offline_2023} has multiple tuning parameters, such as the number of clusters $M$, the reward model class to identify similar actions, and the clustering algorithm.
The authors propose choosing $M$ by \slope.
We describe the estimators, their original tuning procedures, and hyper-parameter grids in \cref{sec: appendix tuned estimators details}.

All methods are evaluated in $90$ different conditions: $9$ UCI ML Repository datasets \citep{bache_uci_2013}, two target policies $\beta_1 \in \{-10, 10\}$, and five logging policies $\beta_0 \in \{-3, -1, 0, 1, 3\}$.
This covers a wide range of scenarios: logging and target policies can be close or differ, their values can be high or low, and dataset sizes vary from small ($107$) to larger ($10\,000$).
Each condition is repeated $5$ times, and we report the MSE over all runs and conditions in \cref{fig: tuning}.
We observe that theory-suggested hyper-parameter values generally perform the best if they exist.
Surprisingly, \ocv often matches their performance while also being a general solution that applies to any estimator.
It typically outperforms \slope and \pasif.

We also consider the problem of joint estimator selection and hyper-parameter tuning.
We evaluate \ocvdr, \ocvips, and \pasif on this task and report our results as \emph{Everything} in \cref{fig: tuning}.
\slope cannot be evaluated because the correct order of the estimators is unclear.
We observe that both of our estimators perform well and have an order of magnitude lower MSE than \pasif.
This shows that \ocv is a reliable and practical method.

\section{Conclusion}

We propose an estimator selection and hyper-parameter tuning procedure for off-policy evaluation that uses cross-validation, bridging an important gap between off-policy evaluation and supervised learning.
Estimator selection in off-policy evaluation has been mostly theory-driven.
In contrast, in supervised learning, cross-validation is preferred despite limited theoretical support.
We overcome the issue of an unknown policy value by using an unbiased estimator on a held-out validation set.
This is similar to cross-validation in supervised learning, where we only have samples from an unknown distribution.
We test our method extensively on nine real-world datasets, as well as both estimator selection and hyper-parameter tuning tasks.
The method is widely applicable, simple to implement, and easy to understand since it relies on principal techniques from supervised learning that are well-known and loved by practitioners.
It also outperforms state-of-the-art methods.

One natural future direction is off-policy learning.
The main challenge is that the tuned hyper-parameters have to work well for any policy instead of a single target policy.
At the same time, naive tuning of some worst-case empirical risk could lead to too conservative choices.
Another potential direction is an extension to reinforcement learning.

\section*{Acknowledgements}
This research was partially supported by DisAi, a project funded by the European Union under the Horizon Europe, GA No. 101079164, \url{https://doi.org/10.3030/101079164}; HERMES - a project by the EU NextGenerationEU through the Recovery and Resilience Plan for Slovakia under the project No. 09I03-03-V04-00336; and OZ BrAIn association.

\clearpage
\appendix

\section{Implementation Details}
\label{sec: appendix implementation details}

\paragraph{Datasets}
All datasets are publicly available online. 
In particular, we use the OpenML dataset repository \citep{vanschoren_openml_2014}.
If there are multiple datasets with the same name, we always use version \emph{v.1}.
\cref{table: datasets} summarizes dataset statistics.\footnote{Our source code is available at \url{https://github.com/navarog/cross-validated-ope}}

\begin{table*}[t]
\caption{Characteristics of the datasets used in the experiments.}
\label{table: datasets}
\begin{center}
\begin{small}
\begin{tabular}{l|ccccccccc}
\toprule
Dataset & ecoli & glass & letter & optdigits & page-blocks & pendigits & satimage & vehicle & yeast\\
\midrule
Classes & 8 & 6 & 26 & 10 & 5 & 10 & 6 & 4 & 10\\
Features & 7 & 9 & 16 & 64 & 10 & 16 & 36 & 18 & 8\\
Sample size & 336 & 214 & 20000 & 5620 & 5473 & 10992 & 6435 & 846 & 1484\\
\bottomrule
\end{tabular}
\end{small}
\end{center}
\end{table*}

\begin{table*}
\caption{Hyper-parameters for the respective estimators resulting in the increasing variance order.}
\label{table: hyper-parameters variance order}
\begin{center}
\begin{small}
\begin{sc}
\resizebox{\linewidth}{!}{
\begin{tabular}{l|ccccccc}
\toprule
Estimator & TruncatedIPS & \switchdr & CAB & DRos & DRps & IPS-$\lambda$ & GroupIPS\\
Variance order & $w_{0.05} \leq w_{0.95}$ & $w_{0.05} \leq w_{0.95}$ & $w_{0.05} \leq w_{0.95}$ & $(w_{0.05})^2 \leq (w_{0.95})^2$ & $w_{0.05} \leq w_{0.95}$ & $h_{-10} \geq h_{10}$ & $M_{2} \leq M_{32}$\\
\bottomrule
\end{tabular}
}
\end{sc}
\end{small}
\end{center}
\end{table*}

\subsection{Estimators Tuned in the Experiments}
\label{sec: appendix tuned estimators details}
To simplify the notation in this section, we define propensity weights $\displaystyle w(x_i, a_i) = \frac{\pi(a_i \mid x_i)}{\pi_0(a_i \mid x_i)}$.

\paragraph{TruncatedIPS}
The \emph{truncated inverse propensity scores} estimator \citep{ionides_truncated_2008} introduces a clipping constant $M > 0$ to the IPS weights 
\begin{align}
\label{eq: truncated ips}
    \hat{V}_\text{TruncatedIPS}(\pi; \cD, M) = \frac{1}{n}\sum_{i=1}^n\min\set{M, w(x_i, a_i)}r_i.
\end{align}
This allows trading off bias and variance.
When $M = \infty$, TIPS reduces to IPS.
When $M = 0$, the estimator returns 0 for any policy $\pi$.
The theory suggests to set $M = \cO(\sqrt{n})$ \citep{ionides_truncated_2008}.
In our experiments, we search for $M$ on the hyperparameter grid of 30 geometrically spaced values. 
The smallest and largest $\tau$ values in the grid are $w_{0.05}$ and $w_{0.95}$, denoting the 0.05 and 0.95 quantiles of the propensity weights. 
We also include the theory-suggested value in the grid.

\paragraph{\switchdr}
The \emph{switch doubly-robust} estimator \citep{wang_optimal_2017} introduces a threshold parameter $\tau$ to ignore residuals of $\hat{f}(x_i, a_i)$ that have too large propensity weights
\begin{multline}
\label{eq: switch dr}
  \hat{V}_{\switchdr}(\pi; \cD, \tau)\\
  = \frac{1}{n}\sum_{i=1}^n \I{w(x_i, a_i) \leq \tau }w(x_i, a_i)
  (r_i - \hat{f}(x_i, a_i))\\
  + \hat{V}_\textsc{DM}(\pi; \cD).
\end{multline}
When $\tau = 0$, \switchdr becomes DM \eqref{eq: dm} whereas $\tau = \infty$ makes it DR \eqref{eq: dr}. 
The authors propose a tuning procedure where they conservatively upper bound bias of DM to the largest possible value for every data point. 
This is to preserve the minimax optimality of \switchdr with using estimated $\hat{\tau}$ as the threshold would only be activated if the propensity weights suffered even larger variance
\begin{align}
    \hat{\tau} = \argmin_\tau \var{\hat{V}_{\switchdr}(\pi; \cD, \tau)} + \text{Bias}_\tau^2\\
    \text{Bias}_\tau^2 = \left[\frac{1}{n}\sum_{i=1}^n\E[\pi]{R_\text{max} \I{w(x_i, a_i) > \tau} \mid x_i}\right]^2,
\end{align}
where the authors assume we know maximal reward value $0 \leq r(x, a) \leq R_\text{max}$, which in our experiments is set at $R_\text{max} = 1$.
We define the grid in our experiments similarly to that of TruncatedIPS, where the grid has 30 geometrically spaced values. 
The smallest and largest $\tau$ values in the grid are $w_{0.05}$ and $w_{0.95}$.
The authors originally proposed a grid of 21 values where the smallest and the largest values are the minimum and maximum of the propensity weights. 
We opted for the larger grid as we did not observe negative changes in the estimator's performance, and we want to keep the grid consistent with the subsequent estimators.

\paragraph{CAB}
The \emph{continuous adaptive blending} estimator \citep{su_cab_2019} weights IPS and DM parts based on propensity weights, where DM is preferred when the propensity weights are large and vice versa
\begin{multline}
\label{eq: cab}
    \hat{V}_\textsc{CAB}(\pi; \cD, M) = \frac{1}{n}\sum_{i=1}^n\sum_{a \in \cA}\pi(a \mid x_i)\alpha_i(a)\hat{f}(x_i, a)\\ 
    + \frac{1}{n}\sum_{i=1}^n w(x_i, a_i) \beta_i r_i,\\
    \alpha_i(a) = 1 - \min\set{Mw(x_i, a)^{-1}, 1},\\
    \beta_i = \min\set{M w(x_i, a_i)^{-1}, 1}.
\end{multline}
The estimator reduces to DM when $M = 0$ and to IPS when $M = \infty$.
The advantage is that this estimator is sub-differentiable, which allows it to be used for policy learning.
The authors do not propose any tuning procedure.
In our experiments, we search for $M$ on the hyperparameter grid of 30 geometrically spaced values with the smallest and largest $M$ values in the grid $w_{0.05}$ and $w_{0.95}$. 

\paragraph{DRos, DRps}
The \emph{doubly-robust estimators with optimistic and pessimistic shrinkages} \citep{su_doubly_2020} are the estimators that shrink the propensity weights to minimize a bound on the mean squared error
\begin{multline}
\label{eq: drs}
  \hat{V}_\textsc{DRs}(\pi; \cD, \lambda)
  = \frac{1}{n}\sum_{i=1}^n \hat{w}_\lambda(x_i, a_i)
  (r_i - \hat{f}(x_i, a_i))\\ + \hat{V}_\textsc{DM}(\pi; \cD),\\
  \hat{w}_{o,\lambda}(x, a) = \frac{\lambda}{w(x, a)^2 + \lambda} w(x, a)\\
  \hat{w}_{p, \lambda}(x, a) = \min\set{\lambda, w(x, a)},
\end{multline}
where $\hat{w}_{o,\lambda}$ and $\hat{w}_{p,\lambda}$ are the respective optimistic and pessimistic weight shrinking variants, and we refer to the estimators that use them as DRos and DRps.
In both cases, the estimator reduces to DM when $\lambda = 0$ and to DR when $\lambda = \infty$.
The authors also propose a tuning procedure to estimate $\hat{\lambda} = \argmin_\lambda \var{\hat{V}_\textsc{DRs}(\pi; \cD, \lambda)} + \text{Bias}_\lambda^2$ where they bound the bias as follows
\begin{align}
    \text{Bias}_\lambda^2 = \left[\frac{1}{n}\sum_{i=1}^n(\hat{w}_\lambda(x_i, a_i) - w(x_i, a_i))(r_i - \hat{f}(x_i, a_i)\right]^2.
\end{align}
Following the authors, our experiments define the hyper-parameter grid of 30 geometrically spaced values.
For DRos, the smallest and largest $\lambda$ values on the grid are $0.01 \times (w_{0.05})^2$ and $100 \times (w_{0.95})^2$ and for DRps, they are $w_{0.05}$ and $w_{0.95}$.

\paragraph{IPS-$\boldsymbol\lambda$}
The \emph{subgaussian inverse propensity scores} estimator \citep{metelli_subgaussian_2021} improves \emph{polynomial} concentration of IPS \eqref{eq: ips} to subgaussian by correcting the propensity weights
\begin{align}
\label{eq: sgips}
    \hat{V}_{\textsc{IPS-}\lambda}(\pi; \cD, \lambda) = \frac{1}{n}\sum_{i=1}^n \frac{w(x_i, a_i)}{1 - \lambda + \lambda w(x_i, a_i)}r_i.
\end{align}
When $\lambda = 0$, the estimator reduces to IPS, and when $\lambda = 1$, the estimator returns 1 for any $\pi$.
Note that this is the \emph{harmonic} correction, while a more general definition uses propensity weights $\displaystyle w_{\lambda, s}(x, a) = \left((1 - \lambda)w(x, a)^s + \lambda\right)^{\frac{1}{s}}$.
The authors also propose a tuning procedure where they choose $\lambda$ by solving the following equation
\begin{align}
    \lambda^2 \frac{1}{n}\sum_{i=1}^n w_{\lambda, \sqrt[4]{n}}(x_i, a_i)^2 = \frac{2\log\frac{1}{\delta}}{3n}.
\end{align}
The equation uses a general definition of $w_{\lambda, s}$ where $s = \sqrt[4]{n}$ and can be solved by gradient descent.
As this parameter has an analytic solution, the authors did not specify any hyper-parameter grid.
We define the grid to be $(1+\exp(-x))^{-1}_{(h \in [-10, 10])}$ where $(h \in [-10, 10])$ are 30 linearly spaced values.

\paragraph{GroupIPS}
The \emph{outcome-oriented action grouping IPS} estimator \citep{peng_offline_2023} has multiple parameters: the reward model class, the clustering algorithm, and the number of clusters. 
In GroupIPS, one first learns a reward model $\hat{f}(x, a)$ to estimate the mean reward.
The reward model is then used to identify similar actions.
The authors \citep{peng_offline_2023}, in their experiments, use a neural network for it.
We simplify it in line with other baselines.
We learn the same ridge regression model and use the estimated mean reward $\hat{f}(x, a)$ to group context-action pairs.
More formally, a clustering algorithm $\cG$ learns a mapping $g = \cG(\cD, \hat{f}, M)$ that assigns each context-action pair $(x, a) \in \cD$ to a cluster $m \in M$ based on its estimated mean reward $\hat{f}(x, a)$. 
While the authors originally used K-means clustering, in our experiments, we use uniform binning as it is computationally more efficient. 
We split the reward space $[0, 1]$ into $M$ equally spaced intervals and assign each context-action pair to the corresponding interval based on its estimated mean reward.
Finally, IPS is used to reweight the policies based on the propensity weights of each cluster
\begin{align}
    \label{eq: group ips}
    \hat{V}_\text{GroupIPS}(\pi; \cD, M) = \frac{1}{n}\sum_{i=1}^n \frac{\pi(g(x_i, a_i) \mid x_i)}{\pi_0(g(x_i, a_i) \mid x_i)}r_i
\end{align}
where $\pi(m, x) = \sum_{a \in \cA}\I{g((x, a) = m}\pi(a \mid x)$ is a shorthand for the conditional probability of recommending any action mapped to cluster $m$ in context $x$.
We still need to tune $M$, and the prior work \citep{peng_offline_2023} uses \slope for it. 
In our experiments, we define the hyper-parameter grid for the number of clusters $M$ as $\set{2, 4, 8, 16, 32}$.

\subsection{Estimator Selection and Hyper-Parameter Tuning}
\label{sec: appendix tuning methods details}

\paragraph{Variance estimation}
\label{sec: appendix variance estimation}
\slope and \ocv estimate the estimator's variance as part of their algorithm.
\slope derives the estimator's confidence intervals from it, and \ocv uses it to set the optimal training/validation ratio.

All estimators in \cref{sec: off-policy evaluation,sec: appendix tuned estimators details} are averaging over $n$ observations $(x_i, a_i, r_i)$ and we use this fact for variance estimation in line with other works \citep{wang_optimal_2017, su_adaptive_2020}.
We illustrate this on TruncatedIPS.
Let $\hat{v}_i(M) = \min\set{M, w(x_i, a_i)}r_i$ be the value estimate of $\pi$ for a single observation $(x_i, a_i, r_i)$ and averaging over it leads to $\bar{v}_M = \hat{V}_\text{TruncatedIPS}(\pi; \cD, M) = \frac{1}{n}\sum_{i=1}^n \hat{v}_i(M)$ . 
Since $x_i$ are i.i.d., the variance can be estimated as 
\begin{align}
    \sigma^2_M \approx \frac{1}{n^2}\sum_{i=1}^n\left(\hat{v}_i(M) - \bar{v}_M\right)^2.
\end{align}
Using this technique in \slope, we get the 95\% confidence intervals as $\left[\bar{v}_M - 2\sigma_M, \bar{v}_M + 2\sigma_M\right]$.
This is also valid estimate in our experiments since $r_i \in [0, 1]$, all policies are constrained to the class defined in \eqref{eq: experiments policy softmax}, which ensures $\pi_0$ has full support, hence $\hat{v}_i$ are bounded. 

\paragraph{\slope}
We use 95\% confidence intervals according to the original work of \citet{su_adaptive_2020}.
In \cref{sec: experiment estimator selection}, we use the order of the estimators $\var{\hat{V}_\textsc{IPS}(\pi)} \geq \var{\hat{V}_\textsc{DR}(\pi)} \geq \var{\hat{V}_\textsc{DM}(\pi)}$.
The order of the hyper-parameter values for seven estimators tuned in \cref{sec: experiment hyper-parameter tuning} is summarized in \cref{table: hyper-parameters variance order}.

Except for IPS-$\lambda$, a hyper-parameter of a higher value results in a higher-variance estimator; hence, the algorithm starts with these.

\paragraph{PAS-IF}
The tuning procedure of \pasif uses a neural network to split the dataset and create surrogate policies.
We modify the original code from the authors' GitHub to speed up the execution and improve stability.
We use the same architecture as the authors \citep{udagawa_policy-adaptive_2023}, a 3-layer MLP with 100 neurons in each hidden layer and ReLU activation.
We observed numerical instabilities on some of our datasets. 
Hence, we added a 20\% dropout and batch normalization after each hidden layer.
The final layer has sigmoid activation.
We use Adam as its optimizer.
We use a batch size of 1000, whereas the authors used 2000.
The loss function consists of two terms, $\cL = \cL_d + \alpha \cL_r$, where $\cL_d$ forces the model to output the propensity weights of surrogate policies that match the original $(w(x_i, a_i))_{i \in [n]}$, and $\cL_r$ forces the model to split the dataset using 80/20 training/validation ratio.
The authors iteratively increase the coefficient $\alpha \in [0.1, 1, 10, 100, 1000]$ until the resulting training/validation ratio is 80/20 $\pm$ 2.
This results in a lot of computation overhead; hence, we dynamically set $\alpha = 0$ if the ratio is within 80/20 $\pm$ 2 and $\alpha = 1000$ otherwise.
Finally, if the logging and target policies substantially differ, the propensity weights $w(x, a)$ are too large, and the original loss function becomes numerically unstable.
Hence, we clip the target weights $\min\set{w(x, a), 10^{7}}$.
The clipping is loose enough not to alter the algorithm's performance but enough to keep the loss numerically stable.
Finally, we run \pasif for 5000 epochs as proposed by the authors, but we added early stopping at five epochs (the tolerance set at $10^{-3}$), and the algorithm usually converges within 100 epochs.

\section{Additional Experiments}
\label{sec: appendix additional experiments}
In these experiments, we perform ablation on the individual components of our method to empirically support our decisions, namely theory-driven training/validation split ratio and the one standard error rule.
We also ablate the number of repeated splits $K$ and show that a higher number improves the downstream performance, but we observe diminishing returns as the results observed in repeated splits are correlated.
We also discuss the importance of choosing an unbiased validator, and we empirically show DM as a validator performs poorly. 
Additionally, we discuss computational complexity of our methods.

\paragraph{Our improvements make standard cross-validation more stable}
Our method has two additional components to reduce the variance of validation error: the training/validation split ratio and one standard error rule. 
We discuss them in \cref{sec: analysis,sec: one standard error rule}.
We ablate our method, gradually adding these components.
We use the same setup as in \cref{fig: UCI estimator selection}, average the results over all datasets, and report them in \cref{fig: UCI gradual improvements}.
We start with the standard 10-fold cross-validation where different validation splits do not overlap; hence, the training/validation ratio is set at 90/10.
We also choose the estimator with the lowest mean squared error, not the lowest upper bound.
In \cref{fig: UCI gradual improvements}, we call this method A: \ocvdr 90/10 split ratio.
Next, we change the selection criterion from the mean loss to the upper bound on mean loss (B: A + one SE rule).
We observe dramatic improvements, making the method more robust so it does not choose the worst estimator.
Then, instead, we try our adaptive split ratio as suggested in \cref{sec: analysis} and see this yields even bigger improvements (C: A + theory split ratio).
We then combine these two improvements together (D: B + C).
This corresponds to the method we use in all other experiments.
We see the one standard error rule does not give any additional improvements anymore, as our theory-driven training/validation ratio probably results in similarly-sized confidence intervals on the estimator's MSE.  
Additionally, as the theory-suggested ratio is not dependent on $K$ number of splits, we also change it to $K=100$, showing this gives additional marginal improvements (E: D + 100 K training/test splits).

\begin{figure}
    \centering
    \includegraphics[width=\linewidth]{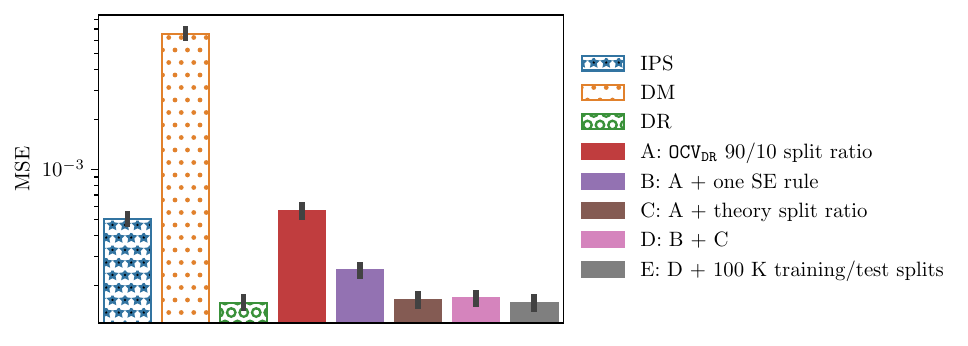}
    \caption{Ablation on proposed improvements from \cref{sec: analysis,sec: one standard error rule} with \ocvdr. This shows that both improvements individually help reduce the variance of estimation errors. However, when combined, the theory split ratio makes the one standard error rule insignificant.}
    \label{fig: UCI gradual improvements}
\end{figure}

\begin{figure}
    \centering
    \includegraphics[width=\linewidth]{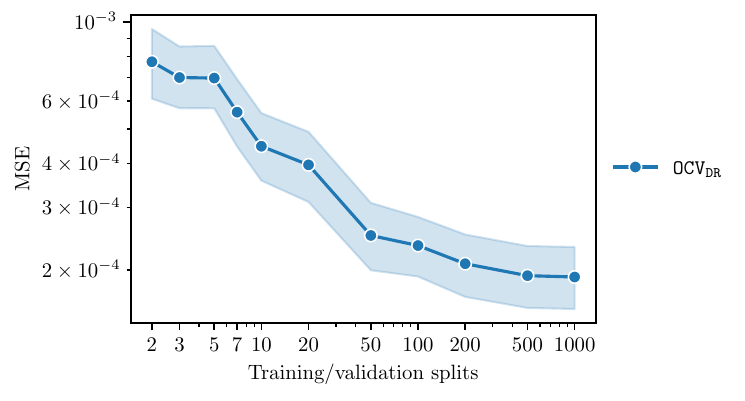}
    \caption{Ablation of the number of repeated training/validation splits with \ocvdr on the \emph{vehicle} dataset averaged over 500 runs. This shows us diminishing improvements as we increase the number of splits.}
    \label{fig: UCI increasing inner cycles}
\end{figure}

\begin{figure*}
    \centering
    \includegraphics[width=\linewidth]{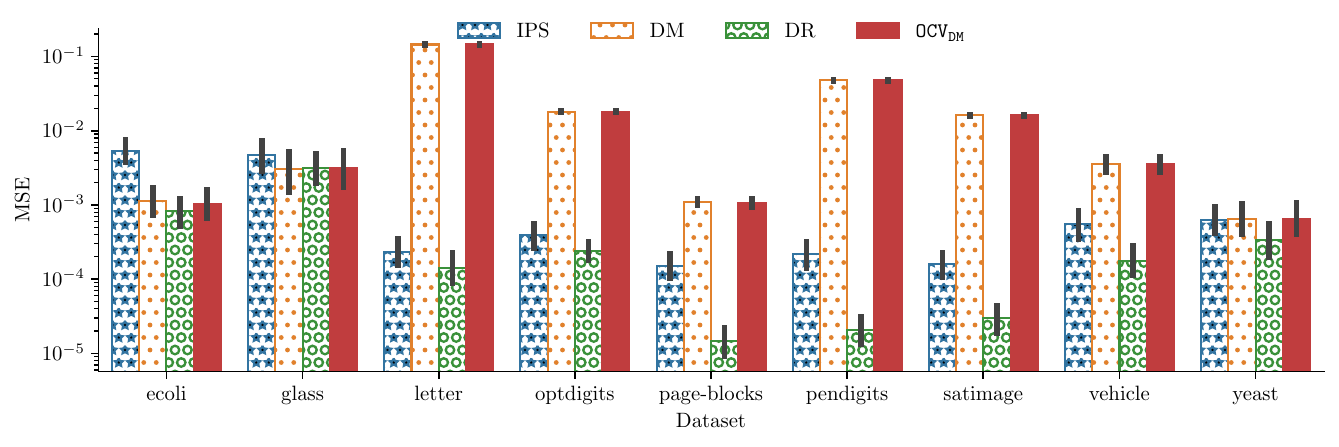}
    \caption{MSE of cross-validation, when using DM as a validator. As DM is \emph{biased}, it systematically chooses an estimator that is biased in the same direction: DM itself.}
    \label{fig: UCI estimator selection CV_DM}
\end{figure*}

We show in more detail in \cref{fig: UCI increasing inner cycles} how the CV performance improves with the increasing number of training/validation splits.
As expected, there are diminishing returns with an increasing number of splits. 
As the splits are correlated, there is an error limit towards which our method converges with increasing $K$. 

\paragraph{The validator used in cross-validation has to be unbiased}
In \cref{sec: method}, we design our method so that the validator has minimal bias. 
That is why we use classes of unbiased estimators, such as IPS \eqref{eq: ips} and DR \eqref{eq: dr}.
Otherwise our optimization objective would be shifted to prefer the estimators biased in the same direction.
This might be the case of poor \pasif's performance as its estimate on the validation set is not unbiased. 
We demonstrate this behavior on the same experimental setup as in \cref{fig: UCI estimator selection}.
We use DM as $\tilde{V}$ and report the results in \cref{fig: UCI estimator selection CV_DM} averaged over 100 independent runs.
The estimator selection procedure of \ocvdm is biased in the same direction as the DM estimator, and the procedure selects it even though it performs poorly.
To compare it with \ocvdr, we see DR performs poorly in \cref{fig: UCI estimator selection beta 1 -10}, especially on the \emph{glass} dataset. 
However, \ocvdr still performs.

\begin{figure}
    \centering
    \includegraphics[width=\linewidth]{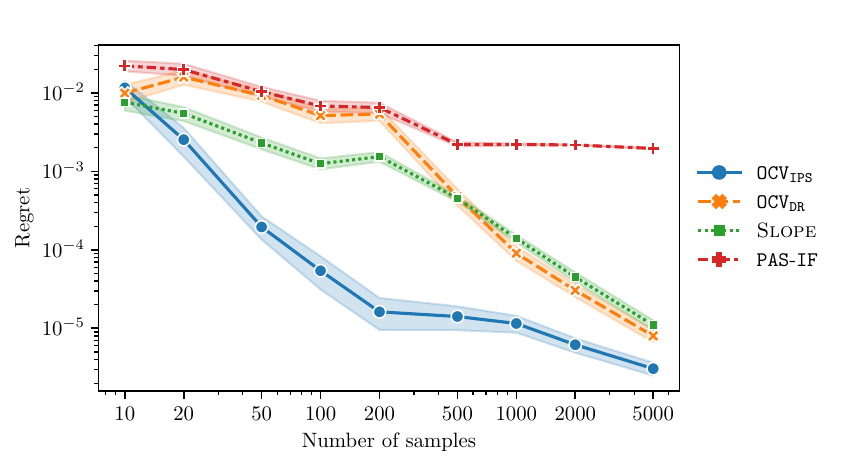}
    \caption{Regret of the estimator selection methods that choose between IPS, DM, and DR on a subsampled \textit{satimage} dataset. \ocv performs well even in low-data regimes.}
    \label{fig: regret satimage}
\end{figure}

\paragraph{\ocv can outperform \slope even in low-data regimes}
We ablate the number of samples in $\cD$ and observe how well the methods choose between IPS, DM, and DR for a given sample size.
We measure $\text{Regret} = L(\hat{V}_*) - L(V_*)$, a difference between the loss of the chosen estimator $\hat{V}_*$ and the optimal estimator $V_*$ that would get the minimal loss in a given run.
$L$ is squared error defined as in \eqref{eq: loss ope}.
We choose the \textit{satimage} dataset as it is the least computationally expensive dataset that is large enough to perform this ablation. 
The experiment is run as described in \cref{sec: experiments}, with $\beta_0, \beta_1 = (1, -1)$ and averaged over $500$ runs.
The results in \cref{fig: regret satimage} confirm our intuition that estimator selection gets more precise with more data.
\ocvips outperforms \slope even at low-data regimes because \slope relies on confidence intervals, which become wide.

\begin{table}
\begin{center}
\begin{small}
\begin{sc}
\begin{tabular}{l|cccc}
\toprule
Method & \ocvips & \ocvdr & \slope & \pasif\\
Time & 0.06s & 0.13s & 0.005s & 13.91s\\
\bottomrule
\end{tabular}
\end{sc}
\end{small}
\end{center}
\caption{Average computational cost of a single policy evaluation from \cref{fig: UCI estimator selection} when doing $K = 10$ training/validation splits with \ocvdr, \ocvips, and \pasif. Computed on AMD Ryzen 9 8945HS and NVIDIA GeForce RTX 4070 Laptop.}
\label{table: computational cost}
\end{table}

\paragraph{Cross-validation is computationally efficient}
To split the dataset, \pasif has to solve a complex optimization problem using a neural network.
This is computationally costly and sensitive to tuning.
We tuned the neural network architecture and loss function to improve the convergence and stability of \pasif.
We also run it on a dedicated GPU and provide more details in \cref{sec: appendix tuning methods details}.
Despite this, our methods are $100$ times less computationally costly than \pasif (\cref{table: computational cost}).

\bibliography{references}

\end{document}